\documentclass[letterpaper, 10pt, conference]{ieeeconf}

\usepackage{graphics}
\usepackage{epsfig}

\usepackage{cite}
\usepackage[dvipsnames, svgnames]{xcolor}
\usepackage{xurl}
\usepackage{hyperref}
\hypersetup{
    colorlinks,
    linkcolor={red!50!black},
    citecolor={blue!50!black},
    urlcolor={blue!80!black}
}
\usepackage{bm}
\usepackage{breqn}
\usepackage{amssymb}
\usepackage{tabularx}
\usepackage{booktabs}
\usepackage{dcolumn}
\newcolumntype{d}[1]{D..{#1}}
\newcolumntype{C}{>{\centering\arraybackslash}X}
\usepackage{soul}
\usepackage[linesnumbered,ruled,vlined]{algorithm2e}

\usepackage{amsthm}

\newcommand{\reals}{\mathbb{R}}

\newcommand{\R}{\reals}

\newcommand{\nonnegintegers}{\mathbb{Z}_0}
\newcommand{\posintegers}{\mathbb{Z}_+}

\renewcommand{\S}{\mathbb{S}}

\usepackage{graphicx}  
\usepackage{array}     

\newcommand{\pd}{\S_{++}}

\newcommand{\Rcal}{\mathcal{R}}

\newcommand{\Tcal}{\mathcal{T}}
\newcommand{\Ucal}{\mathcal{U}}
\newcommand{\Xcal}{\mathcal{X}}

\newcommand{\Zcal}{\mathcal{Z}}

\newcommand{\eqn}[1]{\begin{align} #1 \end{align}}
\newcommand{\eqnN}[1]{\begin{align*} #1 \end{align*}}

\newcommand{\norm}[1]{\left\Vert #1 \right \Vert}


\theoremstyle{plain}

\theoremstyle{remark}




\definecolor{yellow}{cmyk}{0.0,0.10,0.95,0.0}
\definecolor{pred}{cmyk}{0,0.8,0.70,0.0}
\definecolor{bluedefined}{cmyk}{0.46, 0.10, 0, 0.0}




\IEEEoverridecommandlockouts  
\overrideIEEEmargins  
\usepackage[english]{babel}
\usepackage{footnotehyper}

\usepackage{hyperref,tablefootnote,footnotehyper}
\usepackage{amsmath,booktabs,mwe,pdflscape}
\setlength{\belowcaptionskip}{\abovecaptionskip}

\hypersetup{colorlinks}

\usepackage{leftindex} 
\usepackage{graphicx} 
\usepackage{amsmath}
\usepackage{amssymb}
\usepackage{amsthm}
\usepackage{amsfonts}
\usepackage{float}
\usepackage{mathrsfs}
\usepackage{booktabs}
\usepackage{array}
\usepackage[nodisplayskipstretch]{setspace}
\makeatletter
\def\BState{\State\hskip-\ALG@thistlm}
\makeatother
\usepackage[capitalize,nameinlink,noabbrev]{cleveref} 
\theoremstyle{plain}

\newtheorem{thm}{Theorem}

\theoremstyle{definition}
\newtheorem{prob}{Problem}
\newtheorem{assumption}{Assumption}

\newcolumntype{M}[1]{>{\centering\arraybackslash}m{#1}}
\usepackage{pifont}
\usepackage{multirow}
\usepackage{tabularray}
%
\usepackage{cite}
\usepackage{algpseudocode}

\newcommand{\RNum}[1]{\uppercase\expandafter{\romannumeral #1\relax}}
\usepackage{bigints}

\newcommand{\eware}{\textbf{\texttt{eware}}}
\newcommand{\mesch}{\textbf{\texttt{meSch}}}
\newcommand{\gware}{\textbf{\texttt{gware}}}
\algrenewcommand\textproc{}


\usepackage[font=small,skip=10pt]{caption}

\algrenewcommand\algorithmicrequire{\textbf{Input:}}
\algrenewcommand\algorithmicensure{\textbf{Output:}}

\hypersetup{
  colorlinks,
  citecolor=teal, 
  linkcolor=teal,
  urlcolor=purple}


\author{Kaleb Ben Naveed$^{1}$, An Dang$^{1}$, Rahul Kumar$^{1}$, and Dimitra Panagou$^{1,2}$%
\thanks{$^{*}$The authors would like to acknowledge the support of the National Science Foundation (NSF) under grant no. 2223845 and grant no. 1942907.}
\thanks{$^{1}$Department of Robotics, University of Michigan, Ann Arbor, MI, 48109 USA. 
{\tt\small \{kbnaveed@umich.edu\}}}
\thanks{$^{2}$Department of Aerospace Engineering, University of Michigan, Ann Arbor, MI, 48109 USA. }}%

\title{\LARGE \bf
\mesch: Multi-Agent Energy-Aware Scheduling for Task Persistence
} 

\pagenumbering{arabic}
\def\arraystretch{1.2}

\newcommand\footnoteref[1]{\protected@xdef\@thefnmark{\ref{#1}}\@footnotemark}

\begin{document}

\maketitle
\thispagestyle{empty}
\pagestyle{empty}

\begin{abstract}
This paper develops a scheduling protocol for a team of autonomous robots that operate on long-term persistent tasks. The proposed framework, called \mesch{}, accounts for the limited battery capacity of the robots and ensures that the robots return to charge their batteries one at a time at the single charging station. The protocol is applicable to general nonlinear robot models under certain assumptions, does not require robots to be deployed at different times, and can handle robots with different discharge rates. We further consider the case when the charging station is mobile and its state information is subject to uncertainty. The feasibility of the algorithm in terms of ensuring persistent charging is given under certain assumptions, while the efficacy of \mesch{} is validated through simulation and hardware experiments. 
\href{https://github.com/kalebbennaveed/meSch.git}{[Code]}\footnote{Codebase: https://github.com/kalebbennaveed/meSch.git}\href{https://youtu.be/wgH-KgNGJgw}{[Video]}\footnote{Video: https://youtu.be/wgH-KgNGJgw}
\end{abstract}



\section{Introduction}
Autonomous robots are extensively used for persistent missions over long time horizons such as search and rescue operations \cite{search_rescue2}, water body exploration \cite{water_exploration2}, ocean current characterization \cite{gulf2}, and gas leakage inspection \cite{gas_leakage}. 

Trajectory planning for persistent missions requires not only optimizing high-level objectives but also ensuring \textbf{\textit{task persistence}}, realized in the sense that robots must be able to return for recharging as needed. When there is only one available charging station, the robots should coordinate their schedules to ensure the charging station's exclusive use. Additionally, in persistent monitoring scenarios, maximizing the number of robots actively monitoring the environment is desirable, and ensuring that only one robot returns for recharging promotes this behavior. In this paper, we consider the above challenges, including the case where the charging station is mobile, and propose a solution framework \mesch{}.


Recent work on task persistence has explored strategies for single-agent\cite{single_agent_1, single_agent_3, single_agent_5, single_agent_6, naveed2023eclares} as well as multi-agent scenarios\cite{dedicated_01, dedicated_02, dedicated_03, singall_for_all_1, periodic_charging, static_charging_bipartite, static_placing_charging,seewald2024energyaware, static_Arp, bentz2018complete, persis_Fouad, multirobot_game_theoretic, pre-plan_1, pre-plan_2, continous_comms_1, cooperative_1, cooperative_2 }. In the realm of multi-agent task persistence, existing methods explore scenarios with both static \cite{dedicated_01, dedicated_02, dedicated_03, singall_for_all_1,periodic_charging, static_charging_bipartite,static_placing_charging,seewald2024energyaware,static_Arp, bentz2018complete, persis_Fouad, multirobot_game_theoretic} and mobile charging stations \cite{multirobot_game_theoretic, pre-plan_1, pre-plan_2, continous_comms_1, cooperative_1,cooperative_2, MOBILE_RAL_2022}. Most of the existing work on multi-agent task persistence with static charging stations assumes either a dedicated charging station for each agent \cite{dedicated_01, dedicated_02, dedicated_03, multirobot_game_theoretic} or a single charging station that can cater to multiple agents simultaneously \cite{singall_for_all_1, periodic_charging}. Additionally, some studies address the scenarios where the number of charging stations available is fewer than the number of robots \cite{static_charging_bipartite, static_placing_charging, seewald2024energyaware}. They do this by either altering the nominal paths to visit charging stations along the way \cite{static_charging_bipartite} or strategically placing the shared charging resources \cite{static_placing_charging}, or by imposing constraints to ensure that the number of returning robots matches the number of available charging stations \cite{seewald2024energyaware}. In this paper, we explore a scenario where a team of robots exclusively shares a single charging station. 

The closest previous works to this setup are \cite{bentz2018complete, persis_Fouad}. \cite{bentz2018complete} ensured exclusive use of the charging station by deploying the robots at different times. Conversely, \cite{persis_Fouad} ensured exclusive use by manipulating the minimum allowed SoC (State-of-Charge) of each robot's battery using control barrier functions (CBF) \cite{CBF_TAC}. While the CBF-based approach developed in \cite{persis_Fouad} is computationally efficient, it is specifically tailored to the single-integrator model. In this paper, we propose an online scheduling method that ensures the exclusive use of the single charging station. The proposed method does not require robots to be deployed at different times or with different SoC capacities and applies to nonlinear robot models. Additionally, unlike \cite{bentz2018complete, persis_Fouad}, we also address the scenario when the charging station is mobile. 

Much of the literature addressing mobile charging stations considers that one of the robots in the team serves as a mobile charging station. Some approaches pre-plan the robots' paths for the entire mission to determine appropriate rendezvous points where robots will meet for recharging during the mission \cite{pre-plan_1, pre-plan_2}. Others assume continuous communication between the robots to determine the rendezvous points online whenever one of the robots needs to recharge \cite{continous_comms_1}. The other direction of work deploys separate mobile charging robot(s), and plans their paths to recharge the robots involved in the mission along their nominal paths \cite{ cooperative_1, cooperative_2, MOBILE_RAL_2022}.

In this paper, we consider a scenario where one of the robots in the persistent mission serves as a mobile charging station. Additionally, we address the case where robots have uncertain information about the mobile charging station' position due to erroneous state estimation or external disturbances. With this in mind, we devise a method to robustly estimate rendezvous points online while accounting for this uncertainty. The paper is organized as follows: \cref{prob_form} formulates the problem statement, \cref{meSch} presents the proposed solution, and \cref{sec:results} discuss results.




\section{Problem Formulation}
\label{prob_form}



\begin{figure*}[t]
  \centering
  \includegraphics[width=1.95\columnwidth]{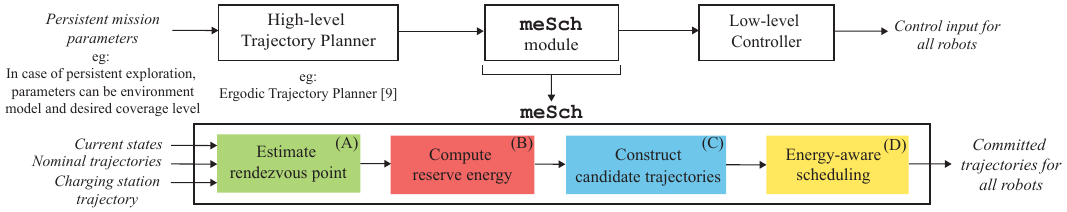}
  \caption{\mesch: Multi-Agent Energy-Aware Scheduling }
  \vspace{-17pt}
  \label{fig:mesch_overview}
\end{figure*}
\subsection{Notation}
Let $\nonnegintegers = \{ 0, 1, 2, ... \}$ and $\posintegers = \{1, 2, 3, ... \}$. Let $\mathbb{R}$, $\mathbb{R}_{\geq 0}$, $\mathbb{R}_{> 0}$ be the set of reals, non-negative reals, and positive reals respectively. Let $\mathcal{S}^{n}_{++}$ denote set of symmetric positive-definite matrices in $\mathbb{R}^{n \times n}$. Let $\mathcal{N}(\mu, \Sigma)$ denote a normal distribution with mean $\mu$ and covariance $\Sigma \in \mathcal{S}^{n}_{++}$. The $Q \in \pd^{n}$, norm of a vector $x \in \R^n$ is denoted $\norm{x}_Q = \sqrt{x^T Q x}.$

\subsection{System Description}
Consider a multi-agent system, in which each robotic system $i \in \Rcal = \{0, 1, \cdots, N-1 \}$, referred to as \textbf{\textit{rechargeable robot}}, comprises the robot and battery discharge dynamics: 

\begin{equation}
\label{eqn:rechargeable}
     \Dot{\chi}^i =
    \begin{bmatrix}
        \Dot{x}^i \\
        \Dot{e}^i 
    \end{bmatrix}
    =
    f^i(\chi^i, u^i) \\
    =
    \begin{bmatrix}
        f_r^i(x^i, u^i) \\
        f_e^i(e^i)
    \end{bmatrix},
\end{equation}
where $N = |\Rcal|$ is the cardinality of the set $\Rcal$, $\chi^i = \begin{bmatrix}{x^i}^T, & e^i\end{bmatrix}^T \in \Zcal^i_r \subset \R^{n+1}$ is the $i^{th}$ robotic system state consisting of the robot state $x^i \in \mathcal{X}^i_r \subset \mathbb{R}^{n}$ and its State-of-Charge (SoC) $e^i \in \mathbb{R}_{\geq 0}$. $u^i \in \Ucal^i_r \subset \mathbb{R}^{m}$ is the control input, $f^i: \Zcal^i_r \times \Ucal^i_r \rightarrow \mathbb{R}^{n+1}$ defines the continuous-time robotic system dynamics, $f_r^i: \Xcal^i_r \times \Ucal^i_r \rightarrow \mathbb{R}^n$ define robot dynamics and $f_e^i: \mathbb{R}_{\geq 0} \to  \mathbb{R}$ define worst-case battery discharge dynamics. We also consider the continuous-time dynamics of the mobile charging station (referred to as \textbf{\textit{mobile charging robot}}):
\begin{subequations}
\label{eqn:mobile_charging}
\eqn{
    \Dot{x}^c &= f_c(x^c, u^c) + w(t),   &&w(t) \sim \mathcal{N} (0, W(t)), \label{eqn: charge_dynamics} \\
    y^c &= z(x^c) + v(t), &&v(t) \sim \mathcal{N} (0, V(t)), \label{eqn:obs_nl_model}
}    
\end{subequations}
where $x^c \in \mathcal{X}_c \subset \mathbb{R}^{c}$ is the charging station state, $u^c \in \mathcal{U}_c \subset \mathbb{R}^{s}$ is the charging station control input, $f_c: \Xcal_c \times \Ucal_c \rightarrow \mathbb{R}^{c}$ defines the continuous-time system dynamics for the mobile charging, $w(t)$ is the time-varying process noise with zero mean and known variance $W(t) \in \mathbb{R}_{\geq 0} $, $y \in \mathbb{R}^{c}$ is the measurement, $z : \mathbb{R}^{c} \to \mathbb{R}^{c}$ is the observation model, and $v(t)$ is the time-varying measurement noise with zero mean, and known covariance $V(t)$.  
\subsection{Problem Statement}
\label{sec:problem_statemebt}
Consider a team of $N + 1$ robots performing a persistent mission. Among them, $N$ robots are the rechargeable robots, require recharging, while $1$ robot serves as the mobile charging robot that doesn't require recharging.\footnote{This could represent a ground vehicle with a battery lasting a few hours. This assumption has been also made in previous works \cite{cooperative_1, MOBILE_RAL_2022}.} The rechargeable robots model the mobile charging robot using \eqref{eqn:mobile_charging}.

We assume the existence of a high-level trajectory planner that designs the nominal trajectories for all $N+1$ robots based on the high-level mission objectives. The objectives for the rechargeable robots are as follows:

1) Track the nominal trajectories and deviate only when returning for recharging.

2) Ensure mutually exclusive use of the mobile charging robot, which strictly follows its nominal trajectory.

Now we formulate the problem mathematically. We define $\Tcal^i$ as the set of times $i^{th}$ robot returns to the charging station:
\eqn{
\Tcal^i = \{t_{0}^i, t_{1}^i, \cdots, t_{m}^i, \cdots\},  \forall i \in \Rcal, \forall m \in \nonnegintegers
}
where $t_{m}^i$ represents the $m^{th}$ return time of the $i^{th}$ rechargeable robot. Let $\Tcal = \cup_{i \in \Rcal} \Tcal^i$ be the union of return times for all robots. We now define two conditions that must hold for all times $t \in [t_0, \infty)$ to achieve the objectives stated above:
\begin{subequations}
\eqn{
&e^i(t) \geq {e^i_{min}} \quad  \quad \forall t \in [t_0, \infty),  \forall i \in \Rcal  \label{eq:min_energy_constraints}\\
&|t^{i_1}_{m_1} - t^{i_2}_{m_2}| > T_{\delta}  \quad  \forall t^{i_1}_{m_1},t^{i_2}_{m_2}  \in \Tcal \label{eq:gap_lower_bound}
}
\end{subequations}
Condition \eqref{eq:min_energy_constraints}, the \textbf{\textit{minimum SoC condition}}, defines the required minimum battery SoC for all rechargeable robots. Condition \eqref{eq:gap_lower_bound}, the \textbf{\textit{minimum gap condition}}, ensures a sufficient time gap between the returns of two robots to avoid charging conflicts. The term $T_{\delta} = T_{ch} + T_{bf}$ represents the charging duration and the buffer time needed for a robot to resume its mission before the next robot arrives. In summary, the problem statement is:
\begin{prob}
    \label{prob1}
    Given the nominal high level planner for all $N+1$ robots, design an algorithm to track the nominal trajectories for the $N$ rechargeable robots while ensuring that conditions \eqref{eq:min_energy_constraints}, \eqref{eq:gap_lower_bound} are met at all times. 
\end{prob}
Our goal in solving \cref{prob1} is to develop an online algorithm for nonlinear robot models that does not assume a precomputed nominal trajectory for the entire mission but instead only requires a short horizon nominal trajectory at each decision iteration, allowing the high-level planner to adapt dynamically.  The algorithm must handle varying discharge rates, account for erroneous state information about the mobile charging station available to rechargeable robots, and remain computationally efficient for scalability.

\section{Method Overview, Motivation \& Key Ideas}
We propose \mesch{} as a solution to \cref{prob1} within the persistent planning framework (\cref{fig:mesch_overview}). As a low-level module, \mesch{} ensures task persistence. The solution follows four steps, with the \mesch{} module running every $T_E$ seconds at discrete time steps $t_j = jT_E$, where $j \in \mathbb{Z}_0$:

\begin{itemize}
\item Compute the rendezvous point where the rechargeable robot will return for recharging.
\item Determine the reserve energy at the rendezvous to account for uncertainty in the charging station's position.
\item Construct a trajectory that follows the portion of the nominal trajectory and then reaches the rendezvous point. We refer to this as a \textit{candidate trajectory.}
\item Commit the new candidate trajectory if it satisfies the minimum energy condition \eqref{eq:min_energy_constraints} and the minimum gap condition \eqref{eq:gap_lower_bound}. The result is the \textit{committed trajectory}.
\end{itemize}
Before detailing each step, we first explain how \mesch{} evaluates the satisfaction of conditions \eqref{eq:min_energy_constraints} and \eqref{eq:gap_lower_bound}. This is one of our key contributions, and we explain it by first discussing its motivation and then describing its mechanism.

\subsection{Method motivation}

Consider $N$ quadrotors sharing a mobile charging rover, as shown in \cref{fig:traj_eware}. To prevent charging conflicts, we propose a scheduling method based on two principles.

First, if multiple robots are predicted to arrive simultaneously, one is rescheduled to arrive earlier. Second, if robots naturally visit the charging station at different times due to varying discharge profiles, the algorithm checks that each robot has enough energy to continue its mission, ensuring that the minimum energy condition is never violated.

To implement this approach, we introduce two modules: \gware{} and \eware{}. The \gware{} module maintains the minimum time gap between charging sessions by constructing \textbf{\textit{gap flags}} and rescheduling robots to arrive early when conflicts arise, ensuring that the minimum gap condition \eqref{eq:gap_lower_bound} is always satisfied. Once all gap constraints are met, the module \eware{} checks whether each robot has enough energy to continue its mission, ensuring that the minimum energy condition \eqref{eq:min_energy_constraints} is always met.

\subsection{Construction of Gap flags}

We begin by describing the construction of gap flags and their role in preventing charging conflicts. At each iteration of \mesch{}, rechargeable robots are sorted by their remaining flight time into the ordered set $\Rcal' = \{0', 1', \dots, N'-1\}$, where $0'$ has the least flight time. For each robot $k \in \Rcal' \setminus \{0'\}$, a gap flag is constructed relative to $0'$ as:
\eqn{
G^{k} = T_{F,j}^{k} > (T_R + T_E + kT_{\delta}),
}
where $T_{F,j}^{k}$ is $k^{th}$ robot remaining flight time at time $t_j$, $T_R$ is the time to reach the charging station, $T_E$ is the decision interval, and $T_{\delta}$ includes the charging duration and the buffer time required to resume the mission.
\begin{table}[h!]
\scriptsize
\setlength\extrarowheight{-3pt}
\begin{tabular}{p{0.10\linewidth} |
p{0.75\linewidth}}
\midrule
\textbf{Symbol} & \quad \quad \quad \quad \quad \quad \quad \textbf{Definition}  \\ 
\midrule
\multicolumn{2}{l}{Indices} \\
\midrule
$i$ & Rechargeable robot index \\
$j$ & \mesch{} iteration index  \\
$k$ & Rechargeable robot index in sorted list \\
\midrule
\multicolumn{2}{l}{Constant shared time horizons} \\
\midrule
$T_{\delta}$ & $T_{\delta} = T_{ch} + T_{bf}$ Charging + Buffer time  \\
$T_N$ & Nominal trajectory horizon of the rechargeable robot available at time $t_j$  \\
$T_R$ & Charging robot nominal trajectory horizon available at $t_j$ / Time taken by the rechargeable robot to reach the charging station \\
$T_E$ & Time interval between $j$ and $j + 1$ iteration  \\


\midrule
\multicolumn{2}{l}{Dynamic time horizons for robot $i$ computed at $t_j$} \\
\midrule
$T^i_{L,j}$ & Worst-case landing time  \\
$T^i_{C,j}$ & Candidate trajectory ($T^i_{C,j} = T_R - T^i_{L,j}$) \\
$T^i_{B,j}$ & back-to-base trajectory ($T^i_{B,j} = T^i_{C,j} - T_N$)  \\
$T^k_{F,j}$ &  Remaining battery time of the $k^{th}$ robot in the sorted list at time $t_j$ \\
\midrule
\multicolumn{2}{l}{Time points} \\
\midrule
$t_j$ & Start time of iteration $j$ \\
$t_{j,N}$ & $t_j + T_N$ \\
$t^i_{j,C}$ & $t_j + T^i_{C,j}$ \\
$t_{j,R}$ & $t_j + T_R$ \\
$t_{m}^i$ &  $m^{th}$ time $i^{th}$ robot returns for recharging   \\
\midrule
\end{tabular}
\vspace{-8pt}
\caption{Time and Index Notation at a glance}
\vspace{-23pt}
\label{table:1}
\end{table}
These flags enforce a minimum gap of $kT_{\delta}$ between robot $0'$ and robot $k$ in $\Rcal'$. For example, the minimum gap between the first and third robots is $2T_{\delta}$. If any gap flag is not satisfied, the robot with the least remaining flight time, i.e., $0'$, is rescheduled for recharging. The satisfaction of the gap flag condition guarantees that there will be at least $T_{\delta}$ between successive charging sessions.

\section{Multi-Agent Energy-Aware Scheduling}
\label{meSch}
In this section, we present the full proposed solution to \cref{prob1}, using $N$ rechargeable quadrotors and one mobile charging rover, without loss of generality. After establishing the construction of gap flags, we demonstrate how they are iteratively checked within the full solution scheme to ensure conditions \eqref{eq:min_energy_constraints} and \eqref{eq:gap_lower_bound} hold for all $t \in [0, \infty)$. We also discuss how the proposed method accounts for the uncertainty in the position of the mobile charging robot. This solution is developed under a few key assumptions: 

\begin{assumption}
\label{assumption:1}
At each iteration of \mesch{}, the nominal trajectories of the rechargeable robots are known for $T_N$ seconds, and the mobile charging robot's for $T_R$ seconds.
\end{assumption}
We also define the notation for trajectories. Let $x^i([t_j, t_j + T_N]; t_j, x^i_j)$ represent the nominal trajectory for the $i^{th}$ rechargeable robot at time $t_j$, starting from state $x^i_j$ and defined over a time horizon of $T_N$ seconds. We denote this as $x^{i, nom}_j$. The same notation applies to other trajectories. An overview of the notation is provided in \cref{table:1}.

\subsection{Estimating Rendezvous Point}
\label{renp}
At the $j^{th}$ iteration of \mesch{}, we estimate the mobile charging robot's position at $t_j + T_R$, i.e., $\hat{x}^c(t_{j, R})$, and place the rendezvous point $d$ meters above it. The rechargeable robots will return to this point, as shown in \cref{fig:traj_eware}.

Given the current state estimate $\hat{x}^c(t_j)$ and its covariance $\Sigma^c(t_j)$ from the EKF, we use the EKF predict equations \cite{applied_optimal_estimation} to compute the mobile charging robot state estimate at $t_{j,R}$, i.e. $\hat{x}^c(t_{j,R})$ and $\Sigma^c(t_{j,R})$. The rendezvous point $x^{rp}_j \in \R^n$ is then computed as follows:
\begin{figure} [t]
  \centering
  \includegraphics[width=1.0\columnwidth]{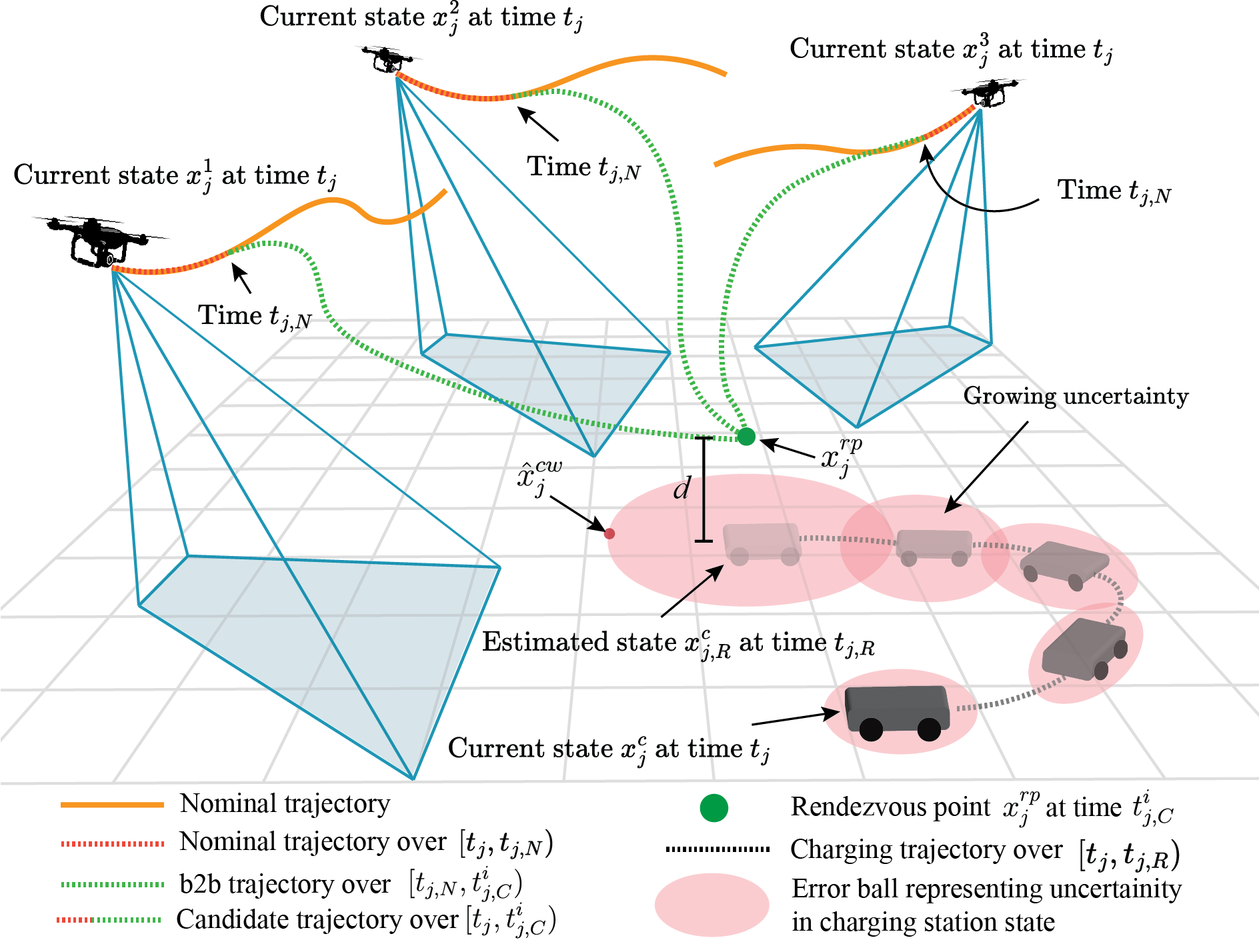}
  \caption{This figure illustrates the generation of candidate trajectories at time $t_j$. All the candidate trajectories terminate at the rendezvous point $x^{rp}_j$ at time $t^i_{j,C}$.}
  \vspace{-20pt}
  \label{fig:traj_eware}
\end{figure}
\eqn{
x^{rp}_j = 
    \begin{bmatrix}
        \Psi(\hat{x}^c(t_{j,R})) \\
        \textbf{0}_{n-2} 
    \end{bmatrix}
    +
    \begin{bmatrix}
        \textbf{0}_{2} \\
        d \\
        \textbf{0}_{n-3}
    \end{bmatrix}    
}
where  $\Psi : \mathbb{R}^{c} \to \mathbb{R}^{2}$ is a mapping that
returns the 2-D position coordinates, $d \in \mathbb{R}_{> 0}$ is added to the z-dim of the state, and $n$ is the rechargeable robot state dimension \eqref{eqn:rechargeable}. The rendezvous point corresponds to the hover reference state $x_{j}^{rp}$ for the rechargeable robot, positioned $d$ meters above the predicted position of the mobile charging robot.


\subsection{Reserve Energy for Uncertainty-Aware Landing}

Along with the rendezvous point $x^{rp}_j$, we also compute the remaining energy the robot must have at the rendezvous point to account for uncertainty in the mobile charging robot's position for landing. This corresponds to the energy cost of going from rendezvous point $x^{rp}_j$ to the furthest state $\hat{x}^{cw}_j$ within the 95\% confidence interval covariance ellipse.
Now, we compute the furthest point on the boundary of the 95\% confidence ellipse as follows:
\eqn{
\hat{x}^{cw}_j = \hat{x}^c(t_{j,R}) + q_{max} \sqrt{\chi^2_{c, 0.95} \lambda_{max}} 
}
where $\lambda_{max} \in \R$ is the largest eigenvalue of the covariance matrix $\Sigma^c(t_{j,R})$, $q_{max} \in \R^c$ is the eigenvector corresponding to $\lambda_{max}$, and $\chi^2_{c, 0.95}$ corresponds to the value from the chi-squared distribution with $c$ degrees of freedom in the 95\% confidence interval. To compute the reserve energy, we formulate the following problem $\forall i \in \Rcal$:
\begin{subequations}
\label{eq:landing_overall}
\eqn{
        \min_{\chi^i(t),u^i(t), t^i_{f}} \ &  t^i_{f} \label{eq:landing_prob}\\
        \textrm{s.t. } \ & \chi^i(t^i_{0}) = \chi^i_{rp} \\
        &  \Dot{\chi} = f_r^i(\chi^i, u^i) \\
        & x^i(t^i_{f}) = \hat{x}^{cw}_j
}
\end{subequations}
where $\chi^i_{rp} = [[x^{rp}_j]^T, e^i_0]^T $ is the initial system state comprising of $x^{rp}_j \in \R^n$ and the energy $e_0 \in \mathbb{R}_{> 0}$. The reserve energy $e^{i, res}_{j}$ and landing time $T^i_{L,j}$ are computed as follows:
\begin{subequations}
    \eqn{e_{j}^{i, res} &= e^i(t^i_{f}) - e^i(t^i_{0}) \quad \forall i \in \Rcal \\
    T^i_{L,j} &= t^i_{f} - t^i_{0} \quad \forall i \in \Rcal
    }
\end{subequations}
\begin{algorithm}
  \footnotesize
  \label{alg:full}
  \DontPrintSemicolon
  \caption{The \mesch{} Algorithm}\label{alg:mesch}
  \SetKw{KwParams}{Parameters:}
  \SetKwProg{When}{When}{}{end}
  \SetKwProg{Fn}{function}{}{end}  
  \Fn{$\mesch$($ x_j^{i, can}  , x_{j-1}^{i, com}, e^{i, res}_j$)}{
    GapViolation = \gware($x_j^{i, can}  , x_{j-1}^{i, com}$)\;  
    \If{GapViolation == 1}{
        \Return
    }
    \eware($x_j^{i, can}, x_{j-1}^{i, com}, e^{i, res}_j$)
}
\end{algorithm}
\subsection{Construction of Candidate Trajectories} 

Now, we generate the candidate trajectories for all rechargeable robots to reach the rendezvous point $x^{rp}_j$ from the current state $x^i(t_j)$ within $T^i_{C,j} = T_R - T^i_{L,j}$ s.

Given nominal trajectories $x^{i, nom}_j$ $\forall i \in \Rcal$, we construct a candidate trajectory that tracks a portion of the nominal trajectory for $T_N$ s and then reaches the rendezvous point $x^{rp}_j$ within $T^i_{B,j}$ = $T^i_{C,j} - T_N$ s. For the $i^{th}$ rechargeable robot, the candidate trajectory is constructed by concatenating the nominal trajectory with a \textbf{\textit{back-to-base (b2b) trajectory}}. 
Let the $i^{th}$ rechargeable robot state at time $t_j$ be $x^i_j \in \Xcal$ and the system state at $t_j$ be $\Xcal^i_j \in \Zcal^i_r$. We construct a b2b trajectory $x^{i,b2b}_j$ defined over interval [$t_{j,N}, t^i_{j,C}$] by solving:
\begin{subequations}
\label{eq:b2b_prob_overall}
\eqn{
        \min_{x^i(t),u^i(t)} \ &  \int_{t_{j,N}}^{t^i_{j,C}} \norm{x^i(t) - x^{rp}_j}_{\mathbf{Q}}^2 + \norm{u^i(t)}_{\mathbf{R}}^2 dt \label{eq:b2b_prob}\\
        \textrm{s.t. } \ & x^i(t_{j,N}) = x_{j}^{i, nom}(t_{j,N}) \\
        &  \Dot{x}^i = f_r^i(x^i, u^i) \\
        & x^i(t^i_{j,C}) = x^{rp}_j
}
\end{subequations}
where $\mathbf{Q} \in \pd^{n}$ and $\mathbf{R} \in \pd^m$ weights state cost and  control cost respectively. 
Once b2b trajectory $x_{j}^{i,b2b}$ is generated, we numerically construct the system candidate trajectory
\eqn{
    \chi^{i,can}_j &= \begin{cases} x^{i,can}_j(t), & t \in [t_j, t^i_{j,C}) \\
        e^{i,can}_j(t), & t \in [t_j, t^i_{j,C})
   \end{cases} 
}over a time interval [$t_j, t^i_{j,C}$) by solving the initial value problem for each rechargeable robot system, i.e.
\begin{subequations}
\eqn{
\label{eqn:candidate}
    \Dot{\chi}^i &= f(\chi^i, u^i(t)),\\
    \chi^i(t_j) &= \chi^i_j \\
    u^i(t) &= \begin{cases} \pi^i_r(\chi^i,  x^{i,nom}_j(t)), & t \in [t_j, t_{j,N}) \\
        \pi^i_r (\chi,  x_{j}^{i,b2b}(t)), &  t \in [t_{j,N}, t^i_{j,C})
   \end{cases} 
}
\end{subequations}
\begin{algorithm}
  \footnotesize
  \label{alg:gware}
  \DontPrintSemicolon
  \caption{The \gware{} algorithm}\label{alg:gware}
  \SetKw{KwParams}{Parameters:}
  \SetKwProg{When}{When}{}{end}
  \SetKwProg{Fn}{function}{}{end}  
  \Fn{\gware($ x_j^{i, can}  , x_{j-1}^{i, com}$)}{
    \tcp{Sort $x_j^{i, can}$ based on $T_{F}^i$} 
  \For{ $k \in \textbf{sorted list}$ $ \Rcal' = \{1', 2', \cdots, N'-1\}$:}{
    $G^{k} = (T_{F}^{k} - T_R - T_{E}) > k(T_{\delta} + T_\delta)$ \;
    \If{ ($G^{k} ==$ 0) $\wedge$ (k.status $\neq$ charging) }{
        $x_j^{0', com} \gets x_{j-1}^{0', com}$ \;
        $x_j^{k, com} \gets x_{j}^{k, can} \quad \forall k \in \Rcal' $ \;
        Initiate landing for the $0^{'th}$ robot at $t^i_{j,C}$ \;
    \Return True
    }
    }
}
\end{algorithm}
\begin{algorithm}
  \label{alg:eware}
  \footnotesize
  \DontPrintSemicolon
  \caption{The \eware{} algorithm}\label{alg:eware}
  \SetKw{KwParams}{Parameters:}
  \SetKwProg{When}{When}{}{end}
  \SetKwProg{Fn}{function}{}{end}  
  \Fn{\eware($ x_j^{i, can}  , x_{j-1}^{i, com}, e^{i, res}_j$)}{
  \For{ $i \in \{0, 1, \cdots, N-1\}$:}{
    \If{ $e^i(t) \geq e^{i, res}_j \ \forall t \in [t_j, t^i_{j,C}]$ }{
        $x_j^{i, com} \gets x_j^{i, can}$
    }
    \Else {
        $x_j^{i, com} \gets x_{j-1}^{i, com}$ \;
        Initiate landing for the $i^{th}$ robot at $t^i_{j,C}$  \;  
    }
    }
}
\end{algorithm}
where $\pi^i_r: \mathcal{Z}_r^i \times \mathcal{X}_r^i \rightarrow \mathcal{U}_r^i$ is a control policy to track the portion of the nominal trajectory and the b2b trajectory. \cref{fig:traj_eware} shows the candidate trajectory generation process with 3 rechargeable robots and 1 mobile charging robot.

\subsection{Energy-aware Scheduling}

Given the candidate trajectory and the reserve energy for each rechargeable robot $i \in \Rcal$, we check if the minimum SoC condition \eqref{eq:min_energy_constraints} and the minimum gap condition \eqref{eq:gap_lower_bound} are satisfied throughout the candidate trajectory. The overall algorithm described in \cref{alg:mesch} consists of the two subroutines: \gware{} (gap-aware) and \eware{} (energy-aware). 
\begin{figure*}[t]
  \centering
  \includegraphics[width=2.0\columnwidth]{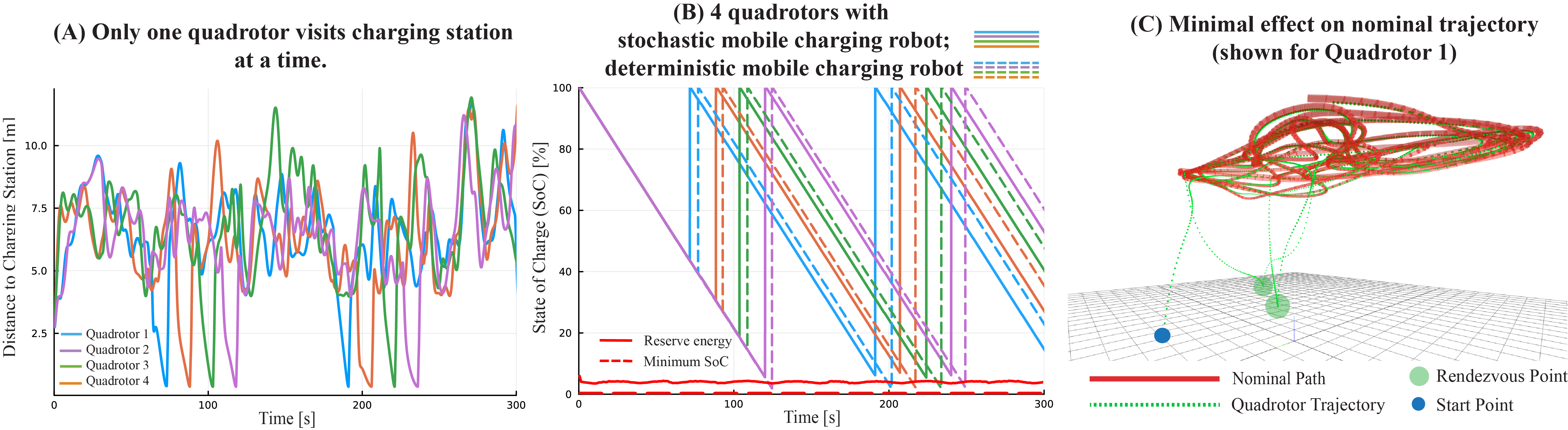}
  \caption{Results from the Multi-Agent Energy-Aware Persistent Ergodic Search Case study}
  \vspace{-17pt}
  \label{fig:ergodic_study}
\end{figure*}
\subsubsection{Gap-aware (\gware)}
\gware{} described in \cref{alg:gware} checks if the rechargeable robots would continue to have the gap of $T_{\delta}$ seconds between their expected returns if the candidate trajectories were committed.  

(Lines 2-4) Here the gap flags are constructed for each $k^{th}$ robot that is not currently charging or returning, relative to the first robot in the sorted list (the $0^{\text{th}}$ robot)
\eqn{
\label{gap_flag}
G^{k} = T_{F,j}^{k} > (T_R + T_{E} + kT_{\delta})
}
Satisfaction of the gap flag condition at the $j^{th}$ iteration implies that rechargeable robots are estimated to have at least $T_{\delta}$ of the gap between their expected returns over the time interval $[t_j, t_{j,R})$, i.e. $\forall t^{i_1}_{m_1},t^{i_2}_{m_2}  \in \Tcal $:  
    \eqn{
&T_{F,j}^{k} > (T_R + T_{E} + kT_{\delta}) \\
&\implies |t^{i_1}_{m_1} - t^{i_2}_{m_2}| > T_{\delta}   &\forall t \in [t_{j}, t_{j, R} )
}
(Lines 5-7) If any gap flags are false, the committed trajectory of the first rechargeable robot in the sorted list remains unchanged, and it returns to the charging station. Meanwhile, the candidate trajectories are committed for the subsequent rechargeable robots in the sorted list. 


\subsubsection{Energy-aware (\eware)} If no gap flag violations occur, indicating that all rechargeable robots have sufficient gaps between their expected return for recharging, we proceed to check if each robot has adequate energy to continue the mission using \eware{} described in \cref{alg:eware}. 

(Lines 3-6) We assess whether each rechargeable robot can reach the charging station without depleting its energy below the reserve level while following the candidate trajectory. We refer to this condition as the \textbf{\textit{Reserve SoC Condition}}:
\eqn{
\label{min_soc}
e^i(t) > e^{i, res}_j \ \forall t \in [t_j, t^i_{j,C}]
}
If successful, the candidate trajectory replaces the current committed one. For the returning robot, a landing controller is assumed to exist:
\begin{assumption} When the returning rechargeable robot reaches rendezvous point $x^c_{rp}$ at $t^i_{j,C}$, there exists a landing controller $\pi^i_l : [t^i_{j,C}, t_{j,R}) \times \Xcal_r^i \rightarrow \Ucal$ that guides the rechargeable robot to the mobile charging robot.
\end{assumption}


\begin{thm}
    \label{thm1}
    Suppose that at $j =0$ the Gap flag condition \eqref{gap_flag} and the Reserve SoC condition \eqref{min_soc} are satisfied. Therefore, the conditions  \eqref{eq:min_energy_constraints} and \eqref{eq:gap_lower_bound} are satisfied for $[t_{0}, t^i_{0, R} )$. Following this, if  solution exist for \eqref{eq:landing_overall} and \eqref{eq:b2b_prob_overall} and the committed trajectories $x_j^{i, com} = x^i([t_j, t^i_{j,C}]; t_j, x^i_j)$ are determined, $\forall i \in \Rcal$, using the \cref{alg:mesch}, then \eqref{eq:min_energy_constraints} and \eqref{eq:gap_lower_bound} are satisfied $\forall t \in [t_j, t_{j-1,R})$, $\forall j \in \posintegers$. 
\end{thm}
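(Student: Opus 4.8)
The plan is to prove the statement by induction on the \mesch{} iteration index $j$, mirroring the recursive-feasibility arguments used for \gatekeeper{}-type schemes. The base case $j = 0$ is supplied directly by the hypothesis: since the Gap flag condition \eqref{gap_flag} and the Reserve SoC condition \eqref{min_soc} hold at $j = 0$, the committed trajectory equals the validated candidate, and constraints \eqref{eq:3e_min_energy_constraints} and \eqref{eq:3f_gap_lower_bound} hold on $[t_0, t^i_{0,R})$. For the inductive step I would assume that the committed trajectories through iteration $j-1$ satisfy both constraints on $[t_{j-1}, t_{j-1,R})$, and then show the committed trajectories produced by \cref{alg:mesch} at iteration $j$ satisfy them on the overlap interval $[t_j, t_{j-1,C})$. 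The structural fact I would exploit is that $T_E < T_N < T_C$, so $t_j = t_{j-1} + T_E$ lies strictly before $t_{j-1,C}$; this gives a nonempty overlap on which the $(j-1)$-th committed trajectory is already certified by the inductive hypothesis.

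The core of the argument is a case split following the branches of \cref{alg:mesch}. First I would treat the energy constraint through \eware{}: if the Reserve SoC condition \eqref{min_soc} holds, the candidate is committed, and because the reserve energy $e^{i,res}_j$ was computed in \eqref{eq:landing_overall} to cover the worst-case landing from $x^c_{rp}$ to $\hat{x}^c_w(t_{j,R})$, the inequality $e^i(t) > e^{i,res}_j$ certifies that \eqref{eq:3e_min_energy_constraints} holds along the whole candidate through docking; if instead \eqref{min_soc} fails, the robot retains $x_{j-1}^{i,com}$ and initiates landing, so feasibility on the overlap follows from the inductive hypothesis. Second I would treat the gap constraint through \gware{}: here the subtle point is to show that the Gap flag condition \eqref{gap_flag}, checked against the robot that is first in the list sorted by remaining battery time $T^i_{F,j}$ with the multiplier $k$, implies the pairwise separation \eqref{eq:3f_gap_lower_bound} for every pair of return times in $\Tcal$. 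I would argue that sorting by remaining battery ensures the $k$-th robot returns no earlier than the first, and that $(T^k_{F,j} - T_L - T_C) > k(T_{ch}+T_\delta)$ forces consecutive returns to be spaced by at least $T_{ch}+T_\delta$, so the single charging robot is occupied by at most one robot at a time; if some flag fails, bringing the first robot back early (Lines 5--7 of \cref{alg:gware}) re-establishes this spacing for the remaining robots.

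I would then combine the two sub-routines: because \gware{} returns before \eware{} is reached whenever a gap violation is predicted, the two checks are never imposed in conflict, and in every branch the trajectory committed at iteration $j$ is certified either on $[t_j, t_{j,R}) \supseteq [t_j, t_{j-1,C})$ when the candidate is committed, or is inherited from a trajectory certified on $[t_{j-1}, t_{j-1,R}) \supseteq [t_j, t_{j-1,C})$ when the old trajectory is retained. The overlap $[t_j, t_{j-1,C})$ is exactly where the two consecutive certificates meet, so no time instant is left uncovered, which closes the induction; \cref{fig:Proof_induc} illustrates precisely this overlap.

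The step I expect to be the main obstacle is the gap argument in \gware{}: translating the \emph{single} per-robot Gap flag \eqref{gap_flag}, expressed through the list position $k$ and the remaining-battery ordering, into the \emph{pairwise} bound \eqref{eq:3f_gap_lower_bound} over all of $\Tcal$ requires carefully tracking how the sorted order and the multiplicative spacing $k(T_{ch}+T_\delta)$ interact across iterations, and in particular verifying that the early-return branch preserves the invariant for the robots that remain. The energy part, by contrast, is a comparatively direct consequence of how $e^{i,res}_j$ is defined.
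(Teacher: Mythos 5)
Your proposal is correct and follows essentially the same argument as the paper: induction on the iteration index $j$, a case split matching the branches of \cref{alg:mesch} (candidate committed versus old committed trajectory retained with early return of the first robot in the sorted list), and the observation that $t_{j,R} > t_{j-1,C}$ makes consecutive certificates cover the overlap interval $[t_j, t_{j-1,C})$. The only cosmetic difference is that you organize the cases by sub-routine (\eware{}/\gware{}) and anchor the base case at $j=0$ rather than $j=1$, which changes nothing of substance.
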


\begin{proof} Please see \cref{proof_theorem_1} for detailed proof. 
\end{proof}
Given that the conditions \eqref{eq:min_energy_constraints} and \eqref{eq:gap_lower_bound} are met for the interval $[t_j, t_{j-1,R})$ $\forall j \in \posintegers$ , and $t_j < t_{j-1, N} < t_{j-1, R}$, the conditions \eqref{eq:min_energy_constraints} and \eqref{eq:gap_lower_bound} are satisfied over $[t_0, \infty)$. 
\begin{table*}[h]
    \scriptsize
    \centering
    \caption{Comparison of baseline methods and proposed \mesch{}}
    \renewcommand{\arraystretch}{1.3} 
    \begin{tabular}{>{\centering\arraybackslash}p{2cm}>{\centering\arraybackslash}p{2.6cm}>{\centering\arraybackslash}p{2.0cm}>{\centering\arraybackslash}p{1.3cm}>{\centering\arraybackslash}p{1.3cm}>{\centering\arraybackslash}p{1.3cm}>{\centering\arraybackslash}p{1.4cm}>{\centering\arraybackslash}p{1.8cm}}
        \hline
        \textbf{Method} & \textbf{Robot Model (Supports Nonlinear Dynamics)} & \textbf{Requires Different Deployment Times} & \textbf{Total Recharging Visits} & \textbf{Concurrent Charging Visits} & \textbf{Minimum Energy Violations} & \textbf{Scalability Analysis} & \textbf{Supports Mobile Charging Station}\\ 
        \hline
        Baseline 1 \cite{persis_Fouad} & 3D Single integrator (\textcolor{red}{No}) & No & 8 & 0 & 0 & Not provided & \textcolor{red}{No}\\ 
        \hline
        Baseline 2 \cite{bentz2018complete}& 3D Quadrotor \cite{beard2008quadrotor} (Yes) & \textcolor{red}{Yes} & 8 & 0 & 0 & Not provided & \textcolor{red}{No}\\ 
        \hline
        Baseline 3  \cite{naveed2023eclares}& 3D Quadrotor \cite{jackson2021planning} (Yes) & No & 8 & \textcolor{red}{2} & 0 & Not provided & \textcolor{red}{No}\\ 
        \hline
        Baseline 4 [\mesch{} with only \gware] & 3D Quadrotor \cite{jackson2021planning} (Yes) & No & 4 & 0 & \textcolor{red}{4} & $\mathcal{O}(N \log N)$ & Yes\\ 
        \hline
        Proposed [\mesch{}] & 3D Quadrotor \cite{jackson2021planning} (Yes) & No & 8 & 0 & 0 & $\mathcal{O}(N \log N)$ & Yes\\ 
        \hline
    \end{tabular}
    \label{tab:meSch_comparison}
\end{table*}
\begin{figure} [t]
    
  \centering
  \includegraphics[width=1.0\columnwidth]{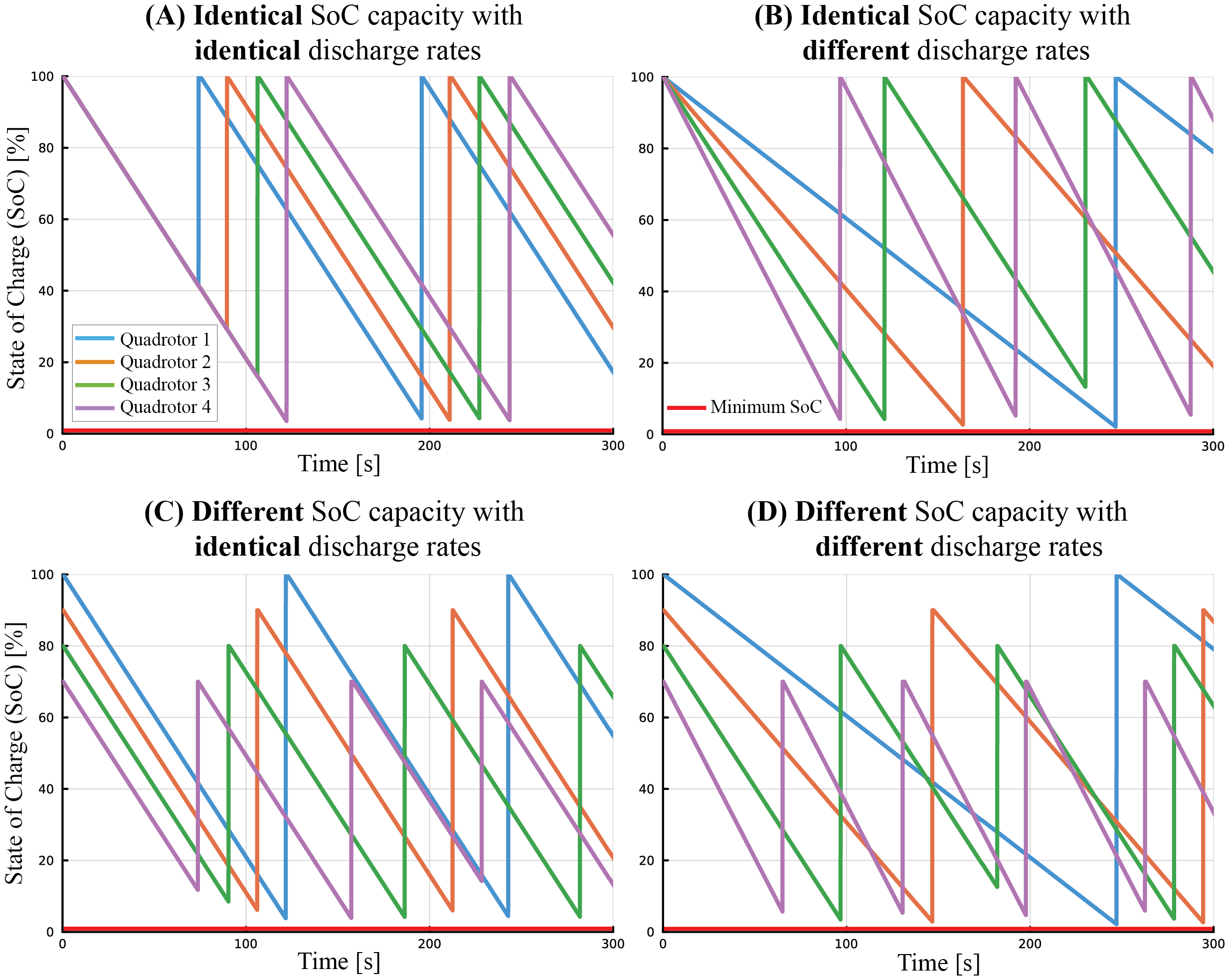}
  \caption{These plots show results for the scenarios when four quadrotors have different SoC capacities and different discharge rates. The plots validate that quadrotors always maintain the minimum of $(T_c + T_\delta)$ gap while visiting the charging station.}
  \vspace{-23pt}
  \label{fig:soc_time}
\end{figure}
\section{Results \& Discussion}
\label{sec:results}

In this section, we evaluate \mesch{} through case studies and hardware experiments. We use quadrotors with 3D nonlinear dynamics from \cite[Eq. (10)]{jackson2021planning} as rechargeable robots and rovers with unicycle models as mobile charging robots. We assume instantaneous recharging ($T_{ch} = 0.0$ s) and a buffer time of $T_{bf} = 15.0$ s, with $T_N = 2.0$ s and $T_R = 18.0$ s, consistent across all experiments.

To generate b2b trajectories, we solve \eqref{eq:b2b_prob_overall} using MPC with the reduced linear quadrotor dynamics from \cite{jackson2021planning}. We use an LQR controller for \eqref{eq:landing_overall} and an LQG controller for landing. Trajectories are generated at 1.0 Hz and tracked at 50.0 Hz with zero-order hold, using the RK4 integration.

\subsubsection{\textbf{Multi-Agent Energy-Aware Persistent Ergodic Search}}

We evaluate \mesch{} by simulating a scenario with four quadrotors and one rover exploring a $10 \times 10$ m domain. Nominal trajectories are collision-free ergodic trajectories with a horizon of $T_H = 30.0$ s. All quadrotors use discharge dynamics $\Dot{e} = -0.667$. Coverage objectives are omitted for brevity. \Cref{fig:ergodic_study} summarizes the results.

\Cref{fig:ergodic_study} (A) shows that only one quadrotor is at the charging station at any time. Collision avoidance is ensured by tuning the $T_\delta$ parameter. \Cref{fig:ergodic_study} (B) plots the SoC evolution for two cases: with a stochastic mobile charging robot and with a deterministic one. In the stochastic case, the quadrotors' SoC never drops below the reserve energy level and in the deterministic case, the SoC levels never drop below zero. Finally, \cref{fig:ergodic_study} (C) shows that the nominal ergodic trajectory (in red) is only affected when quadrotor 1 returns for recharging.
\begin{figure} [t]
  \centering
  \includegraphics[width=0.8\columnwidth]{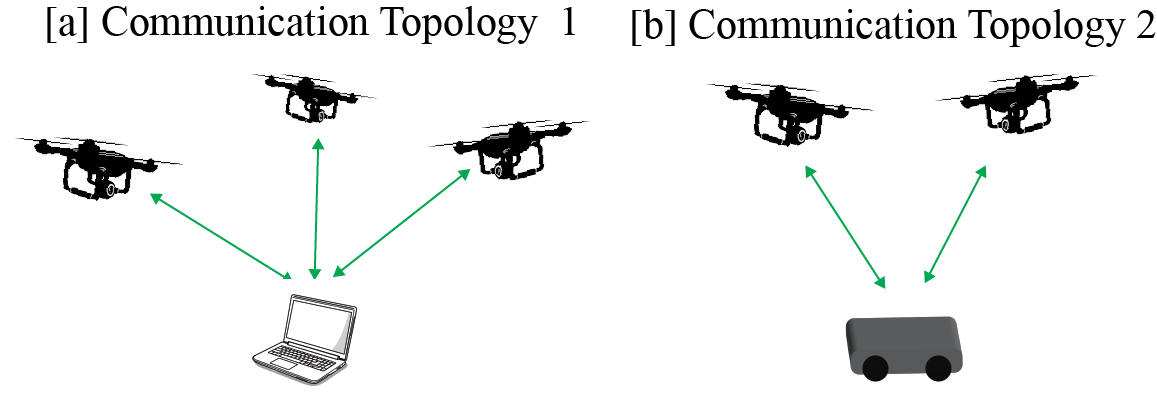}
  \caption{The communication architecture of the system.}
  \vspace{-23pt}
  \label{fig:comms}
\end{figure}

\subsubsection{\textbf{Comparison to the Baseline Methods}} We compare \mesch{} to baseline methods using seven metrics, as shown in \cref{tab:meSch_comparison}. For each method, four robots are used with the same discharge model, $\Dot{e} = -0.667$. The total recharging visits are the same across all methods, except for Baseline 4, which focuses only on the timing of robot visits and does not account for the minimum energy requirements.

Compared to Baseline 1, \mesch{} supports nonlinear dynamic models, making it more applicable to real-world robotic platforms, as demonstrated with 3D quadrotor dynamics \cite{jackson2021planning}. Unlike Baseline 2, \mesch{} effectively handles both identical and varying discharge rates and state-of-charge (SoC) capacities without requiring robots to be deployed at different times. Deploying robots at different times reduces the number of robots available for the mission at any given moment, limiting overall efficiency. By allowing all robots to be deployed simultaneously, \mesch{} simplifies mission planning and increases adaptability to different discharge patterns, as shown in \cref{fig:soc_time} with four quadrotors. Compared to Baseline 3, \mesch{} eliminates simultaneous charging station visits. In Baseline 3, four robots returned concurrently on two occasions, leading to a violation of \eqref{eq:gap_lower_bound}. While Baseline 4, which only includes the \gware{} module from \mesch{}, successfully avoids overlapping visits, it fails to enforce minimum energy constraints, resulting in a violation of \eqref{eq:min_energy_constraints}. Finally, none of the baseline methods support mobile charging stations, a limitation in the environments where fixed charging locations may be infeasible. By addressing this gap, \mesch{} enhances mission endurance and scalability.



\begin{figure} [t]
  \centering
  \includegraphics[width=1.0\columnwidth]{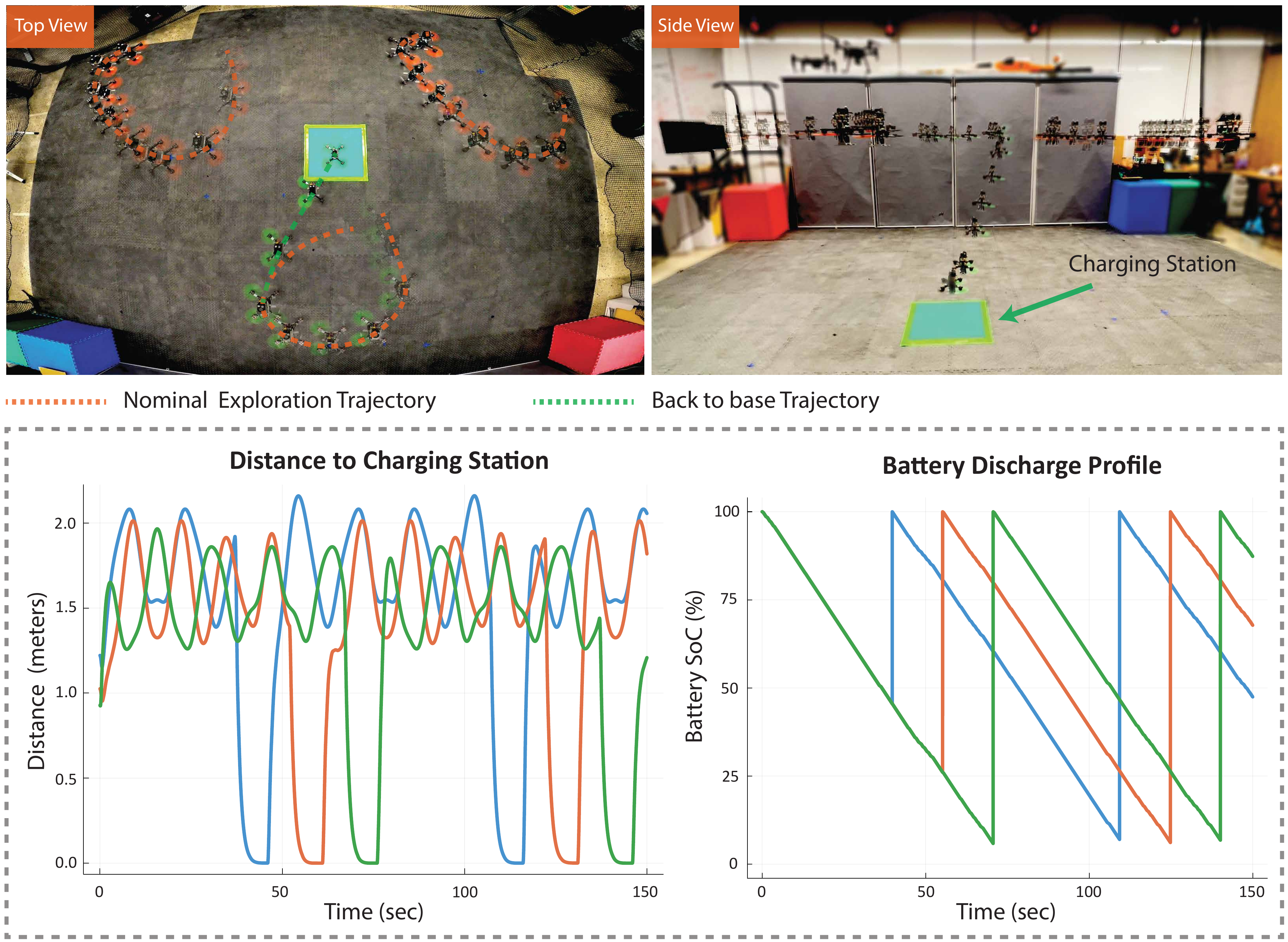}
  \caption{The sequence of three quadrotor trajectories is shown, with one returning to the charging station. The plots display the distance to the station and the battery discharge profile over 150 seconds.}
  \vspace{-20pt}
  \label{fig:mesch_static_Exp}
\end{figure}
\subsubsection{\textbf{Computational efficiency and scalability}}


Distributing computation across the robot network improves the efficiency of the \mesch{} module. The main overhead comes from generating candidate trajectories, with solving \eqref{eq:b2b_prob_overall} and integrating the system’s nonlinear dynamics taking 150 ms and 30 ms on average, respectively. We employ the communication architecture(s) shown in \cref{fig:comms}.

To support real-time applications, each rechargeable robot (e.g. Quad 1) generates candidate trajectories on board, which are transmitted to the central node (Base) for scheduling. The scheduling algorithm has time complexity $\mathcal{O}(N \log N)$, mainly due to the sorting function in line 2 of \cref{alg:gware}. Thus, the method scales with $\mathcal{O}(N \log N)$, where $N$ is the number of rechargeable robots. To demonstrate scalability, we evaluate the method with 40 rechargeable quadrotors (shown in the attached video). In the static case, quadrotors return with $(3\pm1) \%$ battery SoC remaining. With the mobile charging robot, they return with $8\pm1.5\%$ SoC, accounting for reserve energy.

\subsubsection{\textbf{Feasibility and Robustness of \mesch{}}}
If the Gap flag \eqref{gap_flag} and Reserve SoC \eqref{min_soc} conditions hold in the first iteration, \mesch{} guarantees the satisfaction of \eqref{eq:min_energy_constraints} and \eqref{eq:gap_lower_bound} for all times, assuming \eqref{eq:landing_overall} and \eqref{eq:b2b_prob_overall} are solvable as stated in \cref{thm1}. \mesch{} is robust to rechargeable robot failures, as it reduces the number of gap flag conditions \eqref{gap_flag} to check, like removing an inequality constraint in optimization. However, it is not provably robust to new robots, as they might add potentially infeasible constraints. Additionally, \mesch{} is vulnerable to central node failure, which could violate the minimum desired gap due to its centralized nature.

\subsubsection{\textbf{Hardware Demonstration}} 
We validate \mesch{} in two setups: (1) three quadrotors with a static charging station, and (2) two quadrotors with a rover as a mobile charging station. Each quadrotor uses an NVIDIA Orin NX for onboard computing, and the rover has a Raspberry Pi. The communication architecture is shown in \cref{fig:comms}. Experiments were conducted in an indoor arena with 17 Vicon cameras for state estimation. Snapshots and plots are in \cref{fig:mesch_static_Exp} and \cref{fig:mesch_mobile_Exp}. In both setups, quadrotors follow nominal Lissajous coverage trajectories, with only $T_N = 2.0$ s of the future nominal trajectory available at each iteration of \mesch{}.
\begin{figure} [t]
  \centering
  \includegraphics[width=1.0\columnwidth]{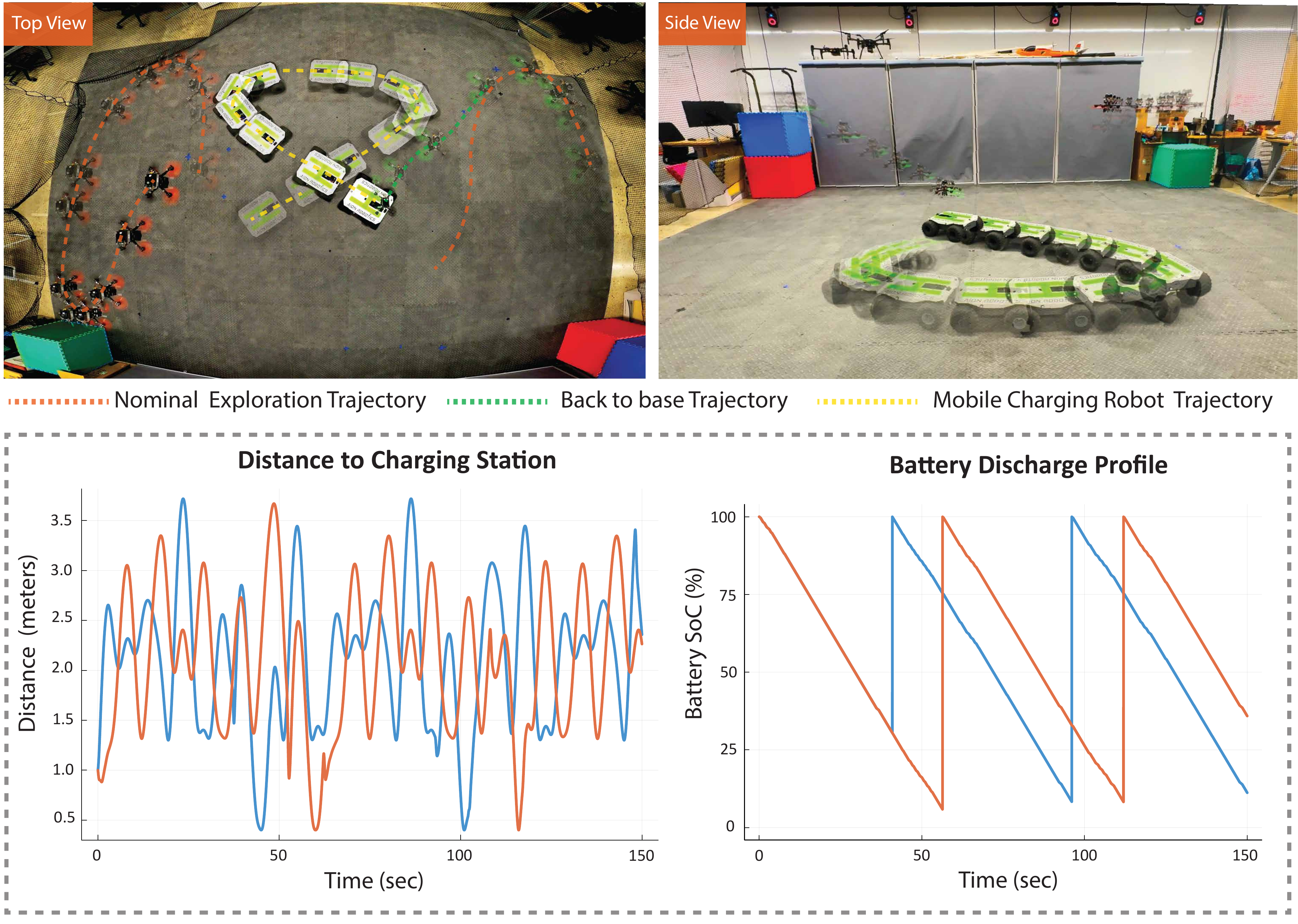}
  \caption{The trajectories of two quadrotors and the rover over a short horizon are shown, with one quadrotor returning to the rover for recharging. The plots display the distance to the charging station and the battery discharge profile over the full 150-second mission.}
  \vspace{-20pt}
  \label{fig:mesch_mobile_Exp}
\end{figure}
Candidate trajectories are generated onboard the quadrotors and sent to the rover’s (or base) computer to verify gap flags and minimum state-of-charge (SoC) conditions. The rover continuously publishes its state and nominal trajectory to assist in trajectory generation.  We also account for delays caused by computational overhead and ROS2 message latency in our implementation. The primary computational costs include candidate trajectory generation and forward propagation ($T_1$), gap flag construction and checking ($T_2$), and miscellaneous delays due to message latency in ROS2 ($T_3$). As long as $T_1 + T_2 + T_3 < T_E$, where $T_E$ is the \mesch{} decision interval, the algorithm ensures recursive feasibility as discussed in \cref{thm1}. In our experiments with three quadrotors, we observed a latency of $600 \pm 150 \text{ ms}$, while $T_E$ was set to $1.5$ s. 

We release all simulation and experimental code. \mesch{} is available as a Julia module, functioning as a low-level filter for a planner. We also provide a Python-ROS2 wrapper for Julia, along with a Docker container for the experiments, and our in-house-built Orin-based DevQuad \cite{gatekeeper}.

\section{Conclusion and Future Work}
In this work, we present \mesch{}, a framework for managing the exclusive use of a mobile charging station by robots performing persistent tasks. Through analysis and case studies, we show that \mesch{} efficiently schedules recharges for robots with nonlinear dynamics, varying battery capacities, and discharge rates, while accounting for uncertainty in the charging station’s position. Future work will enhance \mesch{}'s robustness to central node failure via a distributed implementation, demonstrate its adaptability with continuously replanned trajectories, and refine candidate trajectory generation—especially b2b trajectories—by optimizing them for mission-specific goals like information maximization.

\nocite{*}
\bibliographystyle{IEEEtran}
\bibliography{biblio}
\appendix
\subsection{Proof of \cref{thm1}}
\label{proof_theorem_1}
\begin{proof} The proof, inspired by \cite[Thm. 1]{gatekeeper}, uses induction. 
\subsubsection*{Base Case} At the time $t_1$ and iteration $j = 1$, since both Gap flag condition \eqref{gap_flag} and the Reserve SoC condition \eqref{min_soc} are true, the candidate trajectories are committed for all rechargeable robots i.e. $\forall i \in \Rcal$ and $\forall t^{i_1}_{m_1},t^{i_2}_{m_2}  \in \Tcal$
\begin{subequations}
    \eqnN{
&x_{1}^{i, com}(t) \gets x_{1}^{i, can}(t)  \quad \forall t \in [t_1, t^i_{1,C})\\
&\implies \begin{cases}
    T_{F,1}^{k} > (T_R + T_{E} + kT_{\delta}) \quad \forall k \in \Rcal'\\
    e^i(t) > e^{res}_{1} \quad \forall t \in [t_{1}, t^i_{1, C})
\end{cases} \\
&\implies \begin{cases}
    |t^{i_1}_{m_1} - t^{i_2}_{m_2}| > T_{\delta}  &\forall t \in [t_{1}, t_{1, R} )   \\
    e^i(t) > e_{min}^i \quad &\forall t \in [t_{1}, t_{1, R} )
\end{cases} 
}
\end{subequations}

Since $t_{1, R} > t_{0, R} > t^i_{0, C} \forall i \in \Rcal$, the claim holds.
\subsubsection*{Induction Step} Suppose the claim is true for some $j \in \posintegers$. We show that the claim is true for $j + 1$. 

\subsubsection*{Case 1} When candidate trajectories for all rechargeable robots are valid, i.e. $\forall i \in \Rcal$ and and $\forall t^{i_1}_{m_1},t^{i_2}_{m_2}  \in \Tcal$
\begin{subequations}
    \eqnN{
&x_{j+1}^{i, com}(t) \gets x_{j+1}^{i, can}(t) \quad \forall t \in [t_{j+1}, t^i_{j+1,C})\\
&\implies \begin{cases}
    T_{F,j+1}^{k} > (T_R + T_{E} + kT_{\delta}) \quad \forall k \in \Rcal'\\
    e^i(t) > e^{res}_{j+1} \quad \forall t \in [t_{j+1}, t^i_{j+1, C})
\end{cases} \\
&\implies \begin{cases}
    |t^{i_1}_{m_1} - t^{i_2}_{m_2}| > T_{\delta}  &\forall t \in [t_{j+1}, t_{j+1, R} )  \\
    e^i(t) > e_{min}^i \quad &\forall t \in [t_{j+1}, t_{j+1, R} )
\end{cases} 
}
\end{subequations}
Since $ t_{j+1, R} >  t_{j, R}, \forall i \in \Rcal$  the claim holds.
\subsubsection*{Case 2} This case corresponds to the scenario when the $0^{'th}$ robot in $\Rcal'$ returns either due to violation of Gap flag condition or the Reserve SoC condition, i.e.,
\eqnN{
x_{j+1}^{0', com}(t) \gets x_{j}^{0', com}(t) \quad \forall t \in [t_{j+1}, t^{0'}_{j,C}).}
The candidate trajectories are committed for the remaining robots, i.e. $\forall k \in \Rcal^'\backslash\{0'\}$ and $\forall t^{i_1}_{m_1},t^{i_2}_{m_2}  \in \Tcal$
\begin{subequations}
    \eqnN{
&x_{j+1}^{k, com} \gets x_{j+1}^{k, can} \quad  \forall t \in [t_{j+1}, t^k_{j+1,C})\\
&\implies \begin{cases}
    T_{F,j+1}^{k} > (T_R + T_{E} + kT_{\delta}) \\
    e^k(t) > e^{res}_{j+1} \quad \forall t \in [t_{j+1}, t^k_{j+1, C})
\end{cases} \\
&\implies \begin{cases}
    |t^{i_1}_{m_1} - t^{i_2}_{m_2}| > T_{\delta}  &\forall t \in [t_{j+1}, t_{j+1, R} )  \\
    e^k(t) > e_{min}^k \quad &\forall t \in [t_{j+1}, t_{j+1, R} )
\end{cases} 
}
\end{subequations}

Since $t_{j+1, R} > t^k_{j, C}$,  the claim holds.  
\end{proof}

\end{document}